\newcommand{\tr}{\mathrm{tr}}
\newcommand{\kl}{D_{\mathrm{KL}}}
\newcommand{\dd}{\mathrm{d}}
\newcommand{\ito}{\text{It\^{o}}}
\newcommand{\strat}{\text{Strat}}
\renewcommand{\v}[1]{\boldsymbol{#1}}
\newtheorem{theorem}{Theorem}
\newtheorem*{theorem*}{Theorem}
\theoremstyle{definition}
\title{\textsc{Stochastic Normalizing Flows}}
\author{
Liam Hodgkinson%
\thanks{Department of Statistics, University of California at Berkeley, USA, and International Computer Science Institute, Berkeley, CA, USA. Email: \texttt{liam.hodgkinson@berkeley.edu}}
\and 
Chris van der Heide%
\thanks{School of Mathematics and Physics, University of Queensland, Australia. Email:  \tt{chris.vdh@gmail.com}}
\and 
Fred Roosta%
\thanks{School of Mathematics and Physics, University of Queensland, Australia, and International Computer Science Institute, Berkeley, CA, USA. Email:  \tt{fred.roosta@uq.edu.au}}
\and 
Michael W. Mahoney%
\thanks{Department of Statistics, University of California at Berkeley, USA, and International Computer Science Institute, Berkeley, CA, USA. Email: \texttt{mmahoney@stat.berkeley.edu}}
}
\begin{document}
\maketitle
\begin{abstract}
We introduce stochastic normalizing flows, an extension of continuous normalizing flows for maximum likelihood estimation and variational inference (VI) using stochastic differential equations (SDEs). Using the theory of rough paths, the underlying Brownian motion is treated as a latent variable and approximated, enabling efficient training of neural SDEs as random neural ordinary differential equations. These SDEs can be used for constructing efficient Markov chains to sample from the underlying distribution of a given dataset. Furthermore, by considering families of targeted SDEs with prescribed stationary distribution, we can apply VI to the optimization of hyperparameters in stochastic MCMC.
\end{abstract}

\section{Introduction}

\emph{Normalizing flows} \citep{rezende2015variational} are probabilistic models constructed as a sequence of successive transformations applied to some initial distribution. A key strength of normalizing flows is their expressive power as generative models, while enjoying an explicitly computable form of the likelihood function evaluated on the transformed space. This makes them especially well-equipped for variational inference (VI). Neural networks are often used as inspiration for finding effective transformations \citep{dinh2014nice,berg2018sylvester}. 

\emph{Continuous normalizing flows} were later developed in \cite{chen2018neural} as a means to perform maximum likelihood estimation and VI for large-scale probabilistic models derived from ordinary differential equations (ODEs). The framework stems from the computation of the evolving density of an ODE with random initial value, as the solution to another ODE.
The jump to continuous-time dynamics affords a few computational benefits over its discrete-time counterpart, namely the presence of a trace in place of a determinant in the evolution formulae for the density, as well as the adjoint method for memory-efficient backpropagation. Motivated by deep learning, a family of ODEs, called \emph{neural ordinary differential equations} were constructed, whose Euler discretizations resembled layer-wise transformations of residual neural networks.
Further algorithmic improvements to the framework were presented by \citet{grathwohl2018ffjord}, enabling virtually arbitrary choices of parameterized classes of ODEs. 
Doing all this involves some technical subtlety, and effective neural ODE architectures remain the subject of ongoing research --- see for example \citep{dupont2019augmented,gholami2019anode,zhang2019anodev2}. 

There has also been recent interest in extending these frameworks to a stochastic scenario, that is, training probabilistic models derived from \emph{stochastic} differential equations (SDEs). 
For physical models, where the evolution of a dynamical system is no longer deterministic, or microscopic fluctuations are dependent on components changing too rapidly to quantify, an SDE can be more appropriate. Stochastic extensions of neural ODEs have been considered in \citep{tzen2019neural, liu2019neural, jia2019neural,peluchetti2019infinitely} as limits of deep latent Gaussian models, where they have been suggested to show increased robustness to noisy / adversarial data. Furthermore, unlike deterministic flows, there is a foolproof recipe for constructing a family of SDEs that are ergodic with respect to some target distribution \citep{ma2015complete}. This particular property guarantees the convergence of the solution of an SDE to a prescribed target distribution. Such SDEs are prime candidates for the construction of stochastic MCMC algorithms, by generating sample paths via approximate stochastic integration methods.

However, developing an analogue of the continuous normalizing flows framework for flows constructed from SDEs---in particular, one that comes with simple and rigorous mathematical theory and that does not rely on ad hoc or problem-specific assumptions---is far from trivial. A common approach for conducting VI with SDEs is to rely on Girsanov's theorem. This allows one to estimate the Kullback-Leibler divergence between densities of solutions to two SDEs (for the prior and posterior distributions) with differing drift coefficients \citep{beskos2006exact, tzen2019neural}. Following this approach, \citet{li2020scalable} developed a stochastic adjoint method which scales well to high dimensions, and enables SDEs as latent models in variational autoencoders. Theoretical justification of the method proved challenging, as stochastic calculus is ill-suited for analyzing backward (approximate) solutions to SDEs. Notable deficiencies with these previous approaches include difficulties with non-diagonal diffusion, incompatibility with higher-order adaptive SDE solvers, and a complex means of reconstructing Brownian motion paths from random number generator seeds. Furthermore, the method cannot be justifiably combined with existing approaches of density estimation for SDEs (see \citet{hurn2007seeing}).

On the other hand, recent efforts have made significant strides in applying variational and MCMC methods for idealized Bayesian computation. One of the most significant contributions in this direction is \citet{salimans2015markov}, who performed VI with respect to distributions formed from steps of a reversible Markov chain. For example, the setting of Hamiltonian Monte Carlo was examined in \citet{wolf2016variational}. More recently, \citet{liu2016two} considered optimizing step size in stochastic gradient Langevin dynamics using methods derived from kernelized Stein discrepancy. Langevin flows \citep{rezende2015variational} have been discussed as a potential VI framework that takes inspiration from the SDEs underlying stochastic MCMC \citep{ma2015complete}. Once again, implementation of Langevin flows relies on the approximation of the log-likelihood for a general class of SDEs. 

\subsection*{Contributions}
We provide a general theoretical framework (which we refer to as \emph{stochastic normalizing flows}) for approximating generative models constructed from SDEs using continuous normalizing flows. These approximations can then be trained using existing techniques. By this process, we find that theoretical and practical developments concerning continuous normalizing flows and neural ODEs extend readily to the stochastic setting, without the need of an independent framework. 
The key theoretical enabler underlying our strong results and simple analysis is the \emph{theory of rough paths} \citep{friz2014course}, an alternative stochastic calculus that enables approximation and pathwise treatment of SDEs. Our~approach
\begin{enumerate}
\item enables (i) density estimation, (ii) maximum likelihood estimation, and (iii) variational approximations beyond autoencoders, for \emph{arbitrary} SDE models; and
\item is easily implemented using any general continuous normalizing flows implementation, such as that of~\citet{grathwohl2018ffjord}.
\end{enumerate}
Our framework recovers the stochastic adjoint method of \citet{li2020scalable}, but our approach is sufficiently flexible to overcome its deficiencies.
Moreover, using our approach, any existing neural ODE framework (such as \citet{zhang2019anodev2}) can be extended to SDEs, simply by the addition of a few extra terms. 

Following a review of background material in \S\ref{sec:Review}, the stochastic normalizing flows framework is introduced and discussed in \S\ref{sec:Stochastic},
with our main approximation result presented in Theorem \ref{thm:RoughNormFlow}. Some numerical investigations are conducted in \S\ref{sec:Numerics}, including an application to hyperparameter optimization in stochastic MCMC.

\section{Background Review}
\label{sec:Review}

\subsection{Continuous Normalizing Flows}
We shall begin by reviewing the continuous normalizing flow framework for training ODE models, as our development of random and stochastic normalizing flows will build upon it. 
Consider a parameterized class of models $\{Z_\theta\}_{\theta \in \mathbb{R}^m}$ of the following form: for $f:\mathbb{R}^d \times [0,T] \times \mathbb{R}^m \to \mathbb{R}^d$, let $Z = Z_\theta \in \mathbb{R}^d$ satisfy the ODE with random initial condition (often called a \emph{random ordinary differential equation})
\begin{equation}
\label{eq:ODEModel}
\frac{\dd}{\dd t} Z(t) = f(Z(t), t, \theta),\quad 
Z(0) \sim p_0(\theta).
\end{equation}
In a general machine learning context, one might choose $f$ such that the Euler discretization of (\ref{eq:ODEModel}) resembles layer-wise updates of a residual neural network \citep{lu2017beyond,chen2018neural}, or one may parameterize $f$ as a neural network itself \citep{grathwohl2018ffjord}.
The resulting ODEs constitute the class of so-called \emph{neural ordinary differential equations}.
The following theorem is a consequence of the Liouville equation (equivalently, Fokker-Planck equation) applied to the
solution $Z(t)$ of the random ODE (\ref{eq:ODEModel}), and it yields an ODE for the log density of $Z(t)$ evaluated at $Z(t)$.

\begin{theorem}[\citet{chen2018neural}]
\label{thm:ContNormFlow}
Suppose that $Z(t)$ satisfies (\ref{eq:ODEModel}). The distribution of $Z(t)$ is absolutely continuous with respect to Lebesgue measure, with probability density $p_t$ satisfying
\begin{equation}
\label{eq:ContNormFlow}
\frac{\dd}{\dd t} \log p_t(Z(t)) = -\nabla_z \cdot f(Z(t), t, \theta) 
\end{equation}
\end{theorem}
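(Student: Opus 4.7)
The plan is to derive the result from the \emph{Liouville equation} (the zero-diffusion Fokker–Planck equation) together with the chain rule along trajectories, or equivalently, from Jacobi's formula applied to the flow's Jacobian. Both routes require first establishing absolute continuity of $Z(t)$. Assuming $f(\cdot, t, \theta)$ is sufficiently regular (say locally Lipschitz with at most linear growth), Cauchy–Lipschitz produces a unique flow map $\Phi_t:\mathbb{R}^d\to\mathbb{R}^d$ which is a $C^1$-diffeomorphism. Since $Z(t)=\Phi_t(Z(0))$ with $Z(0)\sim p_0$, the change-of-variables formula gives
\[
p_t(z) = p_0(\Phi_t^{-1}(z))\,|\det D\Phi_t^{-1}(z)|,
\]
so $p_t$ exists as a classical density. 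Conservation of mass under the deterministic flow then yields the continuity equation $\partial_t p_t(z) + \nabla_z\cdot(p_t(z) f(z,t,\theta)) = 0$.

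Next I would apply the chain rule to $t\mapsto \log p_t(Z(t))$:
\[
\frac{\dd}{\dd t}\log p_t(Z(t)) = \frac{\partial_t p_t(Z(t))}{p_t(Z(t))} + \frac{\nabla_z p_t(Z(t))\cdot \dot Z(t)}{p_t(Z(t))}.
\]
Substituting $\dot Z(t)=f(Z(t),t,\theta)$ and expanding the continuity equation as $\partial_t p_t = -\nabla_z p_t \cdot f - p_t\,\nabla_z\!\cdot f$, the two $\nabla_z p_t\cdot f$ terms cancel, leaving $-\nabla_z\!\cdot f(Z(t),t,\theta)$, which is exactly the claimed identity.

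The main technical obstacle is justifying the regularity needed for both the continuity equation and the chain rule to hold in the classical sense; in particular, one needs $p_t$ differentiable in $(t,z)$, which in turn presses hypotheses on $p_0$ and on $f$. A cleaner alternative that avoids differentiating $p_t$ directly is to differentiate the Jacobian $J(t):=D\Phi_t(Z(0))$ along the trajectory. The linearized flow satisfies $\dot J(t) = \nabla_z f(Z(t),t,\theta)\,J(t)$, so Jacobi's formula gives
\[
\frac{\dd}{\dd t}\log|\det J(t)| = \tr\!\bigl(J(t)^{-1}\dot J(t)\bigr) = \tr\bigl(\nabla_z f(Z(t),t,\theta)\bigr) = \nabla_z\!\cdot f(Z(t),t,\theta).
\]
Combined with the pointwise identity $\log p_t(Z(t)) = \log p_0(Z(0)) - \log|\det J(t)|$ obtained from the change-of-variables formula, this yields the result under the minimal assumption that $f(\cdot,t,\theta)\in C^1$, bypassing the need to invoke the PDE form of Liouville's equation.
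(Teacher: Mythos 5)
Your proposal is correct. The paper does not actually spell out a proof: it attributes the theorem to \citet{chen2018neural} and remarks only that it ``is a consequence of the Liouville equation (equivalently, Fokker-Planck equation),'' which is precisely your first route — the continuity equation $\partial_t p_t + \nabla_z\cdot(p_t f)=0$ plus the chain rule along $t\mapsto Z(t)$, with the $\nabla_z p_t\cdot f$ terms cancelling exactly as you show. Your second route, via the variational equation $\dot J=\nabla_z f\,J$ and Jacobi's formula applied to the flow Jacobian, is a genuinely different and somewhat cleaner argument that the paper does not take: it never invokes the PDE for $p_t$, so it sidesteps the joint differentiability of $p_t$ in $(t,z)$ that the Liouville route quietly needs, and it requires only $f(\cdot,t,\theta)\in C^1$. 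One small imprecision: in your first paragraph you assert that local Lipschitz continuity of $f$ already yields a $C^1$-diffeomorphism $\Phi_t$ — Lipschitz alone gives a bi-Lipschitz homeomorphic flow; $C^1$-differentiability in the initial condition (and hence the existence of $J(t)=D\Phi_t$) requires $f\in C^1$ in $z$, which you state only at the very end. Since $\nabla_z\cdot f$ must be defined for the statement to parse, this hypothesis is implicit in the theorem, but it should be declared at the outset of either argument rather than only in the Jacobi route.
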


Naively computing the divergence in (\ref{eq:ContNormFlow}) with automatic differentiation is of quadratic complexity in the dimension $d$. As pointed out by \citet{grathwohl2018ffjord}, this can be improved to linear complexity using a trace estimator \citep{roosta2015improved}:
\begin{equation}
\label{eq:TraceEst}
\nabla_z \cdot f(z) = \tr\left(\frac{\partial f}{\partial z}\right) 
\approx \frac1n \sum_{k=1}^{n} \epsilon_k^\top \frac{\partial f}{\partial z} \epsilon_k,
\end{equation}
where each $\epsilon_k$ is an independent and identically distributed copy of a random vector $\epsilon \in \mathbb{R}^d$ with zero mean and $\mathbb{E}[\epsilon \epsilon^\top] = I$. Common choices for $\epsilon_k$ include standard normal and Rademacher random vectors.

\subsection{The Adjoint Method}
Training continuous normalizing flows often involves minimizing a scalar loss function involving $Z$ and/or the log-density computed via Theorem \ref{thm:ContNormFlow} with respect to the parameters $\theta$. For this, we require gradients of $Z(t)$ with respect to $\theta$ for $t \in [0,T]$. The most obvious approach is to directly backpropagate through a numerical integration scheme such as in \citet{ryder2018black}, but this does not scale well in $T$.
The superior alternative is the \emph{adjoint method}, which computes derivatives of a scalar loss function by solving another differential equation in reverse time. Letting $L$ denote a scalar loss depending on $Z(T)$, the \emph{adjoint} given by $a(t) = \frac{\partial L}{\partial Z(t)}$, as well as the gradient of $L$ in $\theta$, satisfy \citep[\S12]{pontryagin2018mathematical}
\begin{subequations}
\label{eq:AdjointMethod}
\begin{align}
\frac{\dd}{\dd t} a(t) &= -\nabla_z f(Z(t),t,\theta) a(t), \\
\nabla_\theta L &= \int_0^T \nabla_\theta f(Z(t), t, \theta)a(t) \dd t.
\end{align}
\end{subequations}
Together with (\ref{eq:ODEModel}), the equations (\ref{eq:AdjointMethod}) are solved in reverse time, starting from the terminal values $Z(T)$ and $\nabla L(Z(T))$. By augmenting $Z(t)$ together with (\ref{eq:ContNormFlow}), this method also allows for loss functions depending on~$p_T(Z(T))$. 

Solving (\ref{eq:ODEModel}), (\ref{eq:ContNormFlow}), and (\ref{eq:AdjointMethod}) can be achieved using off-the-shelf numerical integrators. Adaptive solvers prove particularly effective, although, as pointed out in \citet{gholami2019anode}, the backward solve (\ref{eq:AdjointMethod}) can often run into stability issues, suggesting a Rosenbrock or other implicit approach \citep{hairer1996solving}. We point out that the same is also true in the stochastic setting; see \citet{hodgkinson2019implicit}, for example. For further implementation details concerning continuous normalizing flows, we refer to \citet{grathwohl2018ffjord}. 

\subsection{Rough Path Theory}

The theory of rough paths was first introduced in \citep{lyons1998differential} to provide a supporting pathwise theory for SDEs. 
It has since flourished into a coherent 
pathwise alternative to stochastic calculus, facilitating direct
stochastic generalizations of results from the theory of ODEs --- we refer to \citet{friz2014course} for a gentle introduction, and \citet{friz2010multidimensional} for a thorough treatment of the topic. 
Suppose that we would like to prescribe meaning to the infinitesimal limit of the sequence of iterates
\begin{equation}
\label{eq:RDELimit}
Z_{t+h} = Z_t + f(Z_t)(X_{t+h} - X_t),\quad \mbox{as }h \to 0^+.
\end{equation}
In the case of SDEs, $X_t$ is a sample path of Brownian motion, so that each $X_{t+h} - X_t$ is a realization of a normal random vector with zero mean and covariance $h I$.
Unfortunately, a strong limit of (\ref{eq:RDELimit}) fails to exist if $X_t$ is too ``rough''.
In particular, suppose that $X_t$ is $\alpha$-H\"{o}lder continuous for $\alpha \in (0,1)$, that is, there exists some $C > 0$ such that $\|X_s - X_t\| \leq C|s - t|^{\alpha}$ for any $s,t \geq 0$. Since the limit (\ref{eq:RDELimit}) is only well-defined if $\alpha \geq 1/2$ \citep{young1936inequality}, a function on $[0,T]$ is \emph{rough} if it is H\"{o}lder-continuous only for $\alpha < 1/2$. Sample paths of Brownian motion constitute rough paths under this definition. The problem is that the discretization (\ref{eq:RDELimit}) invokes the \emph{zeroth-order} approximation $f(Z_{t+s}) \approx f(Z_t)$ for $0 \leq s \leq h$, which proves too poor. By instead taking a \emph{first-order} approximation
\begin{align*}
f(Z_{t+s}) &\approx f(Z_t) + \nabla_z f(Z_t)(Z_{t+s}-Z_t) \\&\approx f(Z_t) + \nabla_z f(Z_t)f(Z_t)(X_{t+s}-X_t),
\end{align*}
we arrive at the \emph{Davie scheme} \citep{davie2008differential}
\begin{equation}
\label{eq:DavieApprox}
Z_{t+h} = Z_t + f(Z_t) (X_{t+h} - X_t) + \nabla_z f(Z_t)f(Z_t)\mathbb{X}_{t,t+h},
\end{equation}
where $\mathbb{X}_{s,t}$ represents the ``integral'' $\int_s^t X_r \dd X_r^\top$. Once again, we cannot uniquely define $\mathbb{X}$ from the path $X$ itself, so instead we prescribe it. In fact, each choice of $\mathbb{X}$ satisfying Chen's relations
\[
\mathbb{X}_{s,t} - \mathbb{X}_{s,u} - \mathbb{X}_{u,t} = (X_s - X_u)(X_t - X_u)^\top,
\]
for any $s,u,t \geq 0$, will reveal a \emph{different} limit for (\ref{eq:DavieApprox}) as $h \to 0^+$, provided $\alpha \geq 1/3$ (for smaller $\alpha$, higher-order approximations are necessary). The pair $\boldsymbol{X} = (X,\mathbb{X})$ is referred to as a \emph{rough path}, and the limit of (\ref{eq:DavieApprox}) as $h \to 0^+$ is the solution to the \emph{rough differential equation} (RDE) 
\begin{equation}
\label{eq:RoughDiffEq}
\dd \boldsymbol{Z}_t = f(Z_t) \dd \boldsymbol{X}_t.
\end{equation}
H\"{o}lder continuity is critical to rough path theory --- in the sequel, we equip the space of $\alpha$-H\"{o}lder functions with the $\alpha$-H\"{o}lder norm, defined by
\[
\|X\|_\alpha \coloneqq \sup_{t \in [0,T]}\|X_t\| + \sup_{\substack{s,t\in [0,T]\\s \neq t}}\frac{\|X_t - X_s\|}{|t - s|^{\alpha}}.
\]
This definition extends to the iterated integral $\mathbb{X}$ by replacing $X_t$ and $X_t - X_s$ with $\mathbb{X}_{0,t}$ and $\mathbb{X}_{s,t}$, respectively. 

It is useful to identify a calculus which satisfies the usual chain and product rules. This occurs when the rough path $\boldsymbol{X}$ is \emph{geometric}, that is,
\begin{equation}
\label{eq:GeoRoughPath}
\mathbb{X}_{s,t} - \mathbb{X}_{t,s} = \tfrac12 (X_t - X_s) (X_t - X_s)^\top,\quad \forall s,t \geq 0.
\end{equation}
Every continuous and piecewise differentiable function $X$ is canonically lifted to a geometric rough path by taking $\mathbb{X}_{s,t} = \int_s^t X_r \frac{\dd}{\dd r} X_r^\top \dd r$, where the derivative is interpreted in the weak sense. In these cases, (\ref{eq:RoughDiffEq}) equates to the ODE $\frac{\dd}{\dd t}Z_t = f(Z_t) \frac{\dd}{\dd t} X_t$.

Geometric rough paths have two key properties of interest:
\begin{enumerate}[label=\Roman*.]
    \item The canonical lifts of any sequence of smooth approximations $X^{(n)}$ which converge to $X$ as $n \to \infty$ in the $\alpha$-H\"{o}lder norm, also converge in the $\alpha$-H\"{o}lder rough path metric 
    \[
    \varrho_{\alpha}((X,\mathbb{X}),(Y,\mathbb{Y})) = \|X - Y\|_{\alpha} + \|\mathbb{X} - \mathbb{Y}\|_{2\alpha},
    \]
    to a geometric rough path $(X,\mathbb{X})$. Conversely, any geometric rough path can be approximated by some sequence of smooth paths \citep[Proposition 2.5]{friz2014course}.
    \item The reverse-time process $\tilde{Z}_t = Z_{T-t}$ of a solution $Z_t$ to any rough differential equation (\ref{eq:RoughDiffEq}) with Lipschitz $f$, itself satisfies the reversed rough differential equation $\dd \tilde{\bm{Z}}_t = -f(T-t,\tilde{Z}_t)\dd \bm{X}_{T-t}$ \emph{if and only if} $\bm{X}$ is~geometric.
\end{enumerate}
By property I, any solution to RDEs driven by a geometric rough path can be approximated by solutions to ODEs. Property II, which follows readily from the definition (\ref{eq:GeoRoughPath}) in the limit (\ref{eq:DavieApprox}), enables the adjoint method for rough differential equations driven by a geometric rough path.

\section{Stochastic Normalizing Flows}
\label{sec:Stochastic}

Let $Z_t$ satisfy the It\^{o} SDE
\begin{equation}
\label{eq:MainSDE}
\dd Z_t = \mu_t(Z_t, \theta) \dd t + \sigma_t(Z_t, \theta) \dd B_t, \quad Z_0 \sim p_0(\theta),
\end{equation}
where $B_t$ is an $m$-dimensional Brownian motion, and $\mu_t:\mathbb{R}^d \to \mathbb{R}^d$,  $\sigma_t:\mathbb{R}^d \to \mathbb{R}^{d \times m}$ are the drift, and diffusion coefficients, respectively. Analogous to neural ODEs, neural SDEs choose $\mu_t$ to resemble a single layer of a neural network \citep{tzen2019neural}. The dropout-inspired construction of \citet{liu2019neural} suggests taking $\sigma_t \propto \mathrm{diag}(\mu_t)$. Alternatively, one can parameterize both $\mu_t$ and $\sigma_t$ by multi-layer neural networks.

The reliance of stochastic calculus on non-anticipating processes as well as the lack of continuity for solution maps of It\^{o} SDEs necessitates complicated and delicate arguments for extending each piece of the continuous normalizing flow framework from \S\ref{sec:Review} to SDEs. We bypass the intricacies of existing theoretical treatments of neural SDEs by an approximation argument: for a smooth approximation $\tilde{B}_t$ of Brownian motion $B_t$, we estimate solutions of an SDE by a random ODE involving $\tilde{B}_t$. 
One must take great care with such approximations. For example, geometric Brownian motion, that is, the solution to $\dd Z_t = \sigma Z_t \dd B_t$, has the explicit expression $Z_t = Z_0 \exp(-\frac{\sigma^2}{2} t + \sigma B_t)$, which is not well-approximated by the solution $\tilde{Z}_t = Z_0 \exp(\sigma \tilde{B}_t)$ to $\frac{\dd}{\dd t} \tilde{Z}_t = \sigma \tilde{Z}_t \frac{\dd \tilde{B}_t}{\dd t}$. Theoretical verification of this approach is challenging using traditional stochastic calculus due to the irregularity of solution maps. Instead, we rely on rough path theory --- particularly properties I and II of geometric rough paths. 

In the rough path framework, one can reconstruct the It\^{o} stochastic calculus via the rough path $\bm{B}^{\ito} = (B, \mathbb{B}^{\ito})$, where $\mathbb{B}^{\ito}_{s,t} = B_t (B_t - B_s)^\top - \frac{t-s}2 I$. Indeed, by \citet[Theorem 9.1]{friz2014course}, letting $\bm{B}^{\ito}(\omega)$ denote a realization of the It\^{o} Brownian motion rough path, the solution to the rough differential equation
\begin{equation}
\label{eq:ItoRDE}
\dd \bm{Z}_t = \mu_t(Z_t, \theta) \dd t + \sigma_t(Z_t, \theta) \dd \bm{B}^{\ito}_t(\omega)
\end{equation}
is a realization of the strong solution to (\ref{eq:MainSDE}). Likewise, the Davie scheme (\ref{eq:DavieApprox}) corresponds to the Milstein integrator for SDEs \citep[\S10.3]{kloeden2013numerical}. 

Unfortunately, $\bm{B}^{\ito}(\omega)$ is not a geometric rough path, and so Theorem \ref{thm:RoughNormFlow} cannot be directly applied. Instead, we shall proceed according to the following steps:
\begin{enumerate}[label=(\roman*)]
    \item Convert the It\^{o} SDE to a Stratonovich SDE (\S\ref{sec:Strat}).
    \item Interpret the Stratonovich SDE pathwise as an RDE driven by a geometric rough path $\bm{B}^{\strat}$ (\ref{eq:StratRDE}).
    \item Approximate the pathwise Stratonovich RDE by a random ODE (\S\ref{sec:WongZakai}).
    \item Train the random ODE as a continuous normalizing flow with added latent variables (\S\ref{sec:RandCNF}).
\end{enumerate}
Consequently, the RDE (\ref{eq:ItoRDE}) is estimated by the ODE
$\frac{\dd Z_t(\omega)}{\dd t} = F_\omega(Z_t(\omega),t,\theta)$
where
\[
F_\omega(z, t, \theta) = \underset{\text{Stratonovich drift}}{\underbrace{\tilde{\mu}_t(z, \theta)}} + \;\;\sigma_t(z,\theta)\underset{\text{approximation}}{\underbrace{\frac{\dd B_t(\omega)}{\dd t}}}.
\]

\subsection{Stratonovich calculus}
\label{sec:Strat}
The unique geometric rough path formed from Brownian motion yields the Stratonovich calculus: $\mathbb{B}^{\strat}_{s,t} = B_t(B_t - B_s)^\top$.
A Stratonovich differential equation is commonly written in the form $\dd Z_t = \mu_t(Z_t) \dd t + \sigma_t(Z_t) \circ \dd B_t$, where $\circ$ denotes Stratonovich integration: for a process $Y_t$ adapted to the filtration generated by $B_t$,
\[
\int_s^t Y_t \circ \dd B_t = \lim_{|\mathcal{P}|\to 0}\sum_{k=1}^N \frac12(Y_{t_k} + Y_{t_{k-1}}) (B_{t_k} - B_{t_{k-1}}),
\]
where $\mathcal{P} = \{0 = t_0 < \cdots < t_N = T\}$ is a partition with mesh size $|\mathcal{P}| = \max_k |t_k - t_{k-1}|$, and the limit is in $L^2$. This is to be compared with It\^{o} integration which is defined instead by
\[
\int_s^t Y_t \,\dd B_t = \lim_{|\mathcal{P}|\to 0}\sum_{k=1}^N Y_{t_{k-1}} (B_{t_k} - B_{t_{k-1}}).
\]
Stratonovich differential equations were recognized in \citet{li2020scalable} to be the correct setting for extending the adjoint method to SDEs. However, the adherence to classical stochastic calculus, which relies on adaptedness, somewhat complicates the argument. In our setting, the advantages of Stratonovich differential equations are clear. Because Stratonovich differential equations can be arbitrarily well-approximated by random ODEs, \emph{all} methods of training continuous normalizing flows extend to them, including the adjoint method. Any It\^{o} SDE can be converted into a Stratonovich SDE by adjusting the drift
\citep[p. 123]{evans2012introduction}, a fact readily seen by comparing limits of (\ref{eq:DavieApprox}) with $\mathbb{B}^{\ito}$ and $\mathbb{B}^{\strat}$.
The following formula is particularly amenable to implementation with automatic differentiation: the It\^{o} SDE $\dd Z_t = \mu_t(Z_t) \dd t + \sigma_t(Z_t) \dd B_t$ is equivalent to the Stratonovich SDE $\dd Z_t = \tilde{\mu}_t(Z_t) \dd t + \sigma_t(Z_t) \circ \dd B_t$ provided that for each $i=1,\dots,d$,
\begin{equation}
\label{eq:ItoCorrection}
\tilde{\mu}_t^i(x) = \mu_t^i(x) - \tfrac12\nabla_x \cdot (\sigma_t(x)\sigma_t^\top(x^{\ast}))_i,
\end{equation}
where $x^\ast$ is an independent copy of $x$, and the subscript denotes the $i$-th row.
Once again, we can make use of the trace estimator (\ref{eq:TraceEst}) to increase performance in higher dimensions. In the rough path theory, Stratonovich SDEs are interpreted pathwise according to the RDE
\begin{equation}
\label{eq:StratRDE}
\dd \bm{Z}_t = \tilde{\mu}_t(Z_t, \theta) \dd t + \sigma_t(Z_t,\theta) \dd \bm{B}_t^{\strat}(\omega),
\end{equation}
which is equivalent to (\ref{eq:ItoRDE}).

\subsection{Wong--Zakai approximations}
\label{sec:WongZakai}

A random ODE $\frac{\dd}{\dd t} Z_t^{(n)} = \mu_t(Z_t^{(n)}) + \sigma_t(Z_t^{(n)}) \frac{\dd B_t^{(n)}}{\dd t}$ estimating a Stratonovich SDE $\dd Z_t = \mu_t(Z_t) \dd t + \sigma_t(Z_t) \circ \dd B_t$ is commonly referred to as a \emph{Wong--Zakai approximation} \citep{twardowska1996wong}, after the authors of the seminal paper \citep{wong1965convergence}, who first illustrated this concept for one-dimensional Brownian motion. We shall consider two types of Wong--Zakai approximation: a Karhunen-Lo\`{e}ve expansion, and a piecewise linear function. These approximations are compared in Figure \ref{fig:WienerApprox}. In practice, we have found that the Karhunen-Lo\`{e}ve expansion with $4 \leq n \leq 10$ terms works well for training, while the piecewise linear approximation is preferable for testing. 
\begin{figure}
    \centering
    \includegraphics[width=\columnwidth]{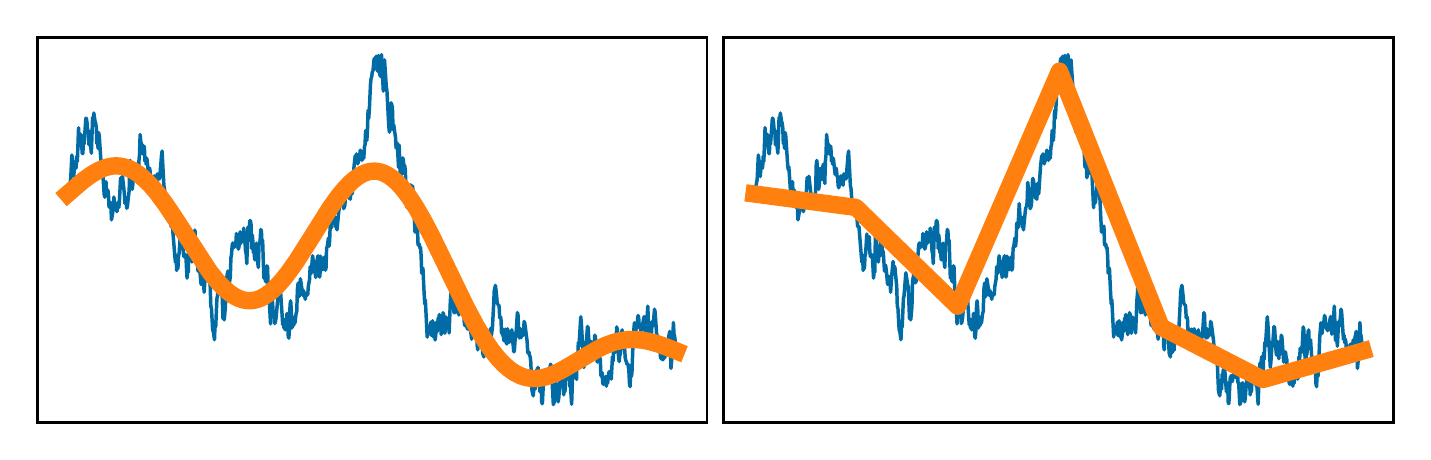}
    \caption{Karhunen-Loeve (left) and piecewise linear (right) approximations of a Brownian motion sample path with $n = 6$ and $\Delta t = \frac{1}{6}$ respectively.}
    \label{fig:WienerApprox}
\end{figure}

\subsubsection{Piecewise linear}
Easily the most common approximation of Brownian motion involves exact simulation on a discrete set of times $\{0,t_1,t_2,\dots,t_n\}$, followed by linear interpolation. More precisely, letting $\Delta t_k = t_{k+1}-t_k$, for each $k=0,\dots,n-1$, we let
\[
B_{t_{k+1}}^{(n)} = B_{t_k}^{(n)} + \sqrt{\Delta t_k}\omega_k,\quad \omega_k \sim \mathcal{N}(0,1),
\]
and consider the approximation
\[
B_t^{(n)} = B_{t_k}^{(n)} + \frac{t - t_k}{t_{k+1}-t_k} (B_{t_{k+1}}^{(n)} - B_{t_k}^{(n)}),\quad t \in [t_k,t_{k+1}].
\]
Integrating the resulting Wong--Zakai approximation using Euler's method on the same set of time points is equivalent to performing the Euler--Maruyama method for solving the Stratonovich SDE. By \citet[Theorem 15.45]{friz2010multidimensional}, as the mesh size $\max_k \Delta t_k \to 0$, the piecewise linear approximation converges almost surely to Brownian motion in the $\alpha$-H\"{o}lder norm for any $\alpha < 1/2$. 

\subsubsection{Karhunen-Lo\`{e}ve expansion}

For any zero-mean Gaussian process $X_t$ on $\mathbb{R}^d$ with $t\in [0,T]$, the covariance function $K(s,t) = \mathbb{E}[X_s X_t^\top]$ is a positive-definite kernel. If $K$ is also continuous, Mercer's theorem guarantees the existence of an orthonormal basis on $L^2([0,T],\mathbb{R}^d)$ of eigenfunctions $\{e_k\}_{k=1}^{\infty}$ with corresponding positive eigenvalues $\{\lambda_k\}_{k=1}^{\infty}$ such that $K(s,t)=\sum_{j=1}^{\infty} \lambda_j e_j(s) e_j(t)$. The process $X_t$ can be expanded in terms of these eigenfunctions as
\[
X_t = \sum_{k=1}^{\infty} \sqrt{\lambda_k} \omega_k e_k(t), \quad \omega_k \sim \mathcal{N}(0,1),
\]
where each $\omega_k$ is independent. This is called the \emph{Karhunen-Lo\`{e}ve expansion} of $X$. Truncating the series after $n$ terms yields the $n$-th order Karhunen-Lo\`{e}ve approximation, and has the smallest mean squared error over all expansions with $n$ orthogonal basis elements. Recalling that we are primarily interested in the endpoints of the solution, instead of expanding Brownian motion itself, we consider an approximation $B_t^{(n)}$ derived from the Karhunen-Lo\`{e}ve expansion of the Brownian bridge $B_t - B_T \frac{t}{T}$:
\[
B_t^{(n)} = \omega_0 \frac{t}{\sqrt{T}} + \sum_{k=1}^{n - 1} \omega_k \frac{\sqrt{2T} \sin(k\pi t  / T)}{k\pi},\quad n = 1,2,\dots.
\]
Using this approximation ensures that the terminal density for SDEs with constant drift and diffusion coefficients is computed exactly. By \citet[Theorem 15.51]{friz2010multidimensional}, $(B_t^{(n)})_{t \in [0,T]}$ converges almost surely as $n \to \infty$ to Brownian motion in the $\alpha$-H\"{o}lder norm for any $\alpha < 1/2$. Furthermore, since $B_t^{(n)}$ is smooth, Wong--Zakai approximations involving $\frac{\dd B_t^{(n)}}{\dd t}$ may be readily solved using adaptive ODE solvers. 

\subsection{Main result}

Using Wong--Zakai approximations, a Stratonovich SDE can be uniformly approximated in H\"{o}lder norm by random ODEs. In Theorem \ref{thm:RoughNormFlow}, we show that the log-densities and loss function gradients for these random ODEs also converge appropriately. More generally, geometric rough paths (including the Stratonovich paths (\ref{eq:StratRDE})) with random initial conditions can be approximately trained as random ODEs.

\begin{theorem}
\label{thm:RoughNormFlow}
Let $\bm{X} = (X, \mathbb{X})$ be an $\alpha$-H\"{o}lder geometric rough path, and $\{X^{(n)}\}_{n=1}^{\infty}$ a sequence of piecewise differentiable functions on $[0,T]$ that approximate $X$ under the $\beta$-H\"{o}lder norm for $\beta \in (\frac13, \frac12)$, that is, $\|X^{(n)} - X\|_\beta \to 0$ as $n \to \infty$. Let $\bm{Z},Z^1,Z^2,\dots$ be solutions to the differential equations
\begin{subequations}
\label{eq:RoughFlow}
\begin{align}
\dd \bm Z_t &= f(Z_t, t, \theta) \dd \bm{X}_t, &&& Z_0 &\sim p_0,\\ \frac{\dd Z_t^{(n)}}{\dd t} &= f(Z_t^{(n)}, t, \theta) \frac{\dd X_t^{(n)}}{\dd t} &&& Z_0^{(n)} &= Z_0,
\end{align}
\end{subequations}
where $f \in \mathcal{C}_b^4(\mathbb{R}^d \times [0,T] \times \mathbb{R}^m)$ and $p_0$ is a density on $\mathbb{R}^d$ such that $\log p_0$ is continuous. Let $p_t^{(n)}$ denote the probability density of $Z_t^{(n)}$ at time $t$, given by (\ref{eq:ContNormFlow}). The distribution of $Z_t$ is absolutely continuous with respect to Lebesgue measure with corresponding continuous density $p_t$ satisfying:
\begin{enumerate}
\item For any $x \in \mathbb{R}^d$, $\displaystyle{\sup_{t \in [0,T]}}|\log p_t^{(n)}(x) - \log p_t(x)| \to 0$ as $n \to \infty$.
\item The path $t \mapsto \log p_t(Z_t)$ is the unique solution to the rough differential equation
\begin{equation}
\label{eq:RoughNF}
\dd \log p_t(Z_t) = -\nabla_z \cdot (f(Z_t, t, \theta) \dd \bm X_t). 
\end{equation}
\item For any smooth loss function $L:\mathbb{R}^{d+1}\to\mathbb{R}$ and $t \geq 0$, as $n \to \infty$,
\begin{equation}
\label{eq:ApproxAdjoint}
\nabla_\theta L(Z_t^{(n)},\log p_t^{(n)}(Z_t^{(n)})) \to \nabla_\theta L(Z_t,\log p_t(Z_t)).
\end{equation}
\end{enumerate}
\end{theorem}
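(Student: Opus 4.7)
The strategy is to enlarge the state variable enough that every quantity of interest appears as a component of a single rough differential equation, and then to invoke the universal limit theorem (i.e., the continuity of the It\^{o}--Lyons solution map) together with Properties I and II of geometric rough paths.

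First I would build the augmented vector field. Define $\tilde f:\mathbb{R}^{d+1}\times[0,T]\times\mathbb{R}^m\to\mathbb{R}^{(d+1)\times m}$ by appending to $f$ the row $-g$, where $g_k(z,t,\theta)=\sum_{i=1}^{d}\partial_{z_i}f_{ik}(z,t,\theta)$, so that $g(z,t,\theta)v=\nabla_z\cdot(f(z,t,\theta)v)$ for every $v\in\mathbb{R}^m$. Because $f\in\mathcal{C}_b^4$, we have $\tilde f\in\mathcal{C}_b^3$, which is the regularity demanded by the universal limit theorem for rough paths of H\"older exponent $\beta\in(1/3,1/2)$. By Theorem~\ref{thm:ContNormFlow} applied to each ODE driven by the piecewise differentiable $X^{(n)}$, the process $W_t^{(n)}:=(Z_t^{(n)},\log p_t^{(n)}(Z_t^{(n)}))$ solves $\dot W_t^{(n)}=\tilde f(W_t^{(n)},t,\theta)\dot X_t^{(n)}$ with initial datum $(Z_0,\log p_0(Z_0))$. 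Each $X^{(n)}$ admits a canonical geometric lift $\bm X^{(n)}$, and by Property I the hypothesis $\|X^{(n)}-X\|_\beta\to 0$ together with the geometricity of $\bm X$ yields $\varrho_\beta(\bm X^{(n)},\bm X)\to 0$. Applying the universal limit theorem to the augmented RDE $\dd\bm W_t=\tilde f(W_t,t,\theta)\dd\bm X_t$ gives a unique solution $\bm W$ and the convergence $W^{(n)}\to W$ in $\beta$-H\"older norm. Uniqueness identifies the first component as $\bm Z$; the second component is then simultaneously the pathwise limit of $\log p_t^{(n)}(Z_t^{(n)})$ and the unique solution of (\ref{eq:RoughNF}), which is Part~2, and the absolute continuity of the law of $Z_t$ follows from the fact that the rough flow $\varphi_t$ of $\bm Z$ is a.s.\ a $\mathcal{C}^1$-diffeomorphism under our regularity on $f$.

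For Part~1, fix $x\in\mathbb{R}^d$. Integrating the continuous-normalizing-flow identity along the forward trajectory with $Z_0=\varphi_t^{(n),-1}(x)$ yields $\log p_t^{(n)}(x)=\log p_0(\tilde Z_t^{(n)})-\tilde Y_t^{(n)}$, where $(\tilde Z_r^{(n)},\tilde Y_r^{(n)})$ is the solution of the time-reversed augmented ODE started at $(x,0)$ at $r=0$; the existence and structure of this reverse ODE are exactly what Property~II provides for the canonical lift of $X^{(n)}$. Because time reversal of a geometric rough path is again a geometric rough path and is jointly continuous in the driver, $\varrho_\beta(\bm X^{(n)}_{t-\cdot},\bm X_{t-\cdot})\to 0$ uniformly in $t\in[0,T]$, and the universal limit theorem applied to this reverse system gives $(\tilde Z_r^{(n)},\tilde Y_r^{(n)})\to(\tilde Z_r,\tilde Y_r)$ uniformly in $r$ and in the terminal time~$t$. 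Continuity of $\log p_0$ then produces the desired $\sup_{t\in[0,T]}|\log p_t^{(n)}(x)-\log p_t(x)|\to 0$, with the limit identified as the density of $Z_t=\varphi_t(Z_0)$ via the rough change-of-variables formula.

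Part~3 follows by one further enlargement: the $\theta$-derivatives $\partial_\theta W_t$ satisfy a linear variational RDE driven by $\bm X$ whose coefficients are built from $\partial_w\tilde f$, $\partial_\theta\tilde f$ and $W_t$ itself. Since $\tilde f\in\mathcal{C}_b^3$, these coefficients are $\mathcal{C}_b^2$, so the universal limit theorem applies once more and $\partial_\theta W_t^{(n)}\to\partial_\theta W_t$; the chain rule combined with Part~2 and the smoothness of $L$ then gives (\ref{eq:ApproxAdjoint}). The principal obstacle throughout is the bookkeeping of regularity: the forward pass consumes one derivative of $f$ through the divergence, the sensitivity equation consumes another, and the continuity of the It\^{o}--Lyons map consumes a third, which is precisely why we require $f\in\mathcal{C}_b^4$ rather than the more usual $\mathcal{C}_b^3$ for pure RDEs. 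A secondary subtlety, which I would absorb into the rough-path machinery of \citet{friz2010multidimensional}, is the uniformity in the terminal time $t$ of the reverse-flow continuity argument in Part~1; this follows from the joint continuity of the It\^{o}--Lyons map in its driver, its initial condition, and its time parameter on bounded intervals.
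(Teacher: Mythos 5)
Your proof follows essentially the same route as the paper's: both rest on the canonical lift of the approximants, the continuity of the It\^{o}--Lyons solution map (the universal limit theorem; Theorem 8.5 in Friz--Hairer), the $\mathcal{C}^1$-diffeomorphism property of the rough flow for absolute continuity (Theorem 8.10), Property II of geometric rough paths for the reverse flow and the uniformity in $t$ in Part~1, and differentiability of the It\^{o}--Lyons map in its initial data for Part~3. The only difference is organizational---you fold the pair $(Z,\log p_t(Z_t))$ into a single augmented RDE with vector field $\tilde f$, whereas the paper keeps two nested solution maps $\Phi$ and $\Psi$---but the key lemmas invoked and the structure of the argument are identical.
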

\begin{proof}[Proof of Theorem \ref{thm:RoughNormFlow}]
Recall that each $X^{(n)}$ can be lifted canonically to a rough path $\bm{X}^{(n)}$ such that $\rho_{\beta}(\bm{X}^{(n)},\bm{X}) \to 0$ as $n \to \infty$. For an arbitrary rough path $\bm{Y}$, we let $\Phi_t(\boldsymbol{Y}, \xi)$ and $\Psi_t(\bm{Y},\ell)$ denote the solution maps for the rough differential equations $\dd \bm{Z}_t = f(Z_t,t,\theta)\dd \bm{Y}_t$, $Z_0 = \xi$ and $\dd \bm{L}_t = -\nabla_z\cdot f(Z_t,t,\theta) \dd \bm{Y}_t$, $L_0 = \ell$, respectively. By \citet[Theorem 8.10]{friz2014course}, $\Phi_t(\boldsymbol{Y},\cdot)$ is a $\mathcal{C}^1$-diffeomorphism, and hence, for $Z_0 \sim p_0(\theta)$ and any $t \in [0,T]$, $Z_t = \Phi_t(\boldsymbol{X}, Z_0)$ is an absolutely continuous random variable, whose corresponding density we denote by $p_t$. In fact, denoting by $\Phi_{-t}(\boldsymbol{Y},\cdot)$ the inverse of $\Phi_t(\bm{Y},\cdot)$,
\begin{align}
\label{eq:DensityNCoM}
p_t^{(n)}(x) &= p_0(\Phi_{-t}(\bm{X}^{(n)},x))\left|\det \frac{\partial \Phi_{-t}(\bm{X}^{(n)},x)}{\partial x}\right|,\\
\label{eq:DensityCoM}
p_t(x) &= p_0(\Phi_{-t}(\bm{X},x))\left|\det \frac{\partial \Phi_{-t}(\bm{X},x)}{\partial x}\right|,
\end{align}
and so both $p_t^{(n)}$ and $p_t$ are continuous. Furthermore, by \citet[Theorem 8.5]{friz2014course}, for any $\frac13 < \gamma < \beta$, there exist constants $C_\gamma^{\Phi}$ and $C_\gamma^{\Psi}$ such that for any $\beta$-H\"{o}lder continuous rough paths $\boldsymbol{X}$, $\boldsymbol{Y}$ and $\xi, \tilde{\xi} \in \mathbb{R}^d$, $\ell, \tilde{\ell} \in \mathbb{R}_+$,
\begin{align}
\label{eq:PhiRegularity}
\|\Phi(\boldsymbol{X}, \xi) - \Phi(\boldsymbol{Y}, \tilde{\xi})\|_{\gamma} &\leq C_\gamma^\Phi (\|\xi - \tilde{\xi}\| + \varrho_{\beta}(\boldsymbol{X}, \boldsymbol{Y}))\\
\label{eq:PsiRegularity}
\|\Psi(\bm{X}, \ell) - \Psi(\bm{Y}, \tilde{\ell})\|_{\gamma} &\leq C_\gamma^\Psi (|\ell - \tilde{\ell}| + \varrho_{\beta}(\bm{X},\bm{Y})).
\end{align}
We deduce the following for any $t \in [0,T]$ and $x \in \mathbb{R}^d$: \textbf{(i)} $\|Z^{(n)} - Z\|_\gamma \to 0$ by (\ref{eq:PhiRegularity}); \textbf{(ii)} using (i) and continuity of $p_t$, $\log p_t(Z_t^{(n)}) \to \log p_t(Z_t)$; \textbf{(iii)} as a consequence of (\ref{eq:DensityNCoM}), (\ref{eq:DensityCoM}), (\ref{eq:PhiRegularity}), and \citet[Theorem 8.10]{friz2014course}, $p_t^{(n)}(x)\to p_t(x)$; \textbf{(iv)} combining (ii) and (iii), $\log p_t^{(n)}(Z_t^{(n)}) \to \log p_t(Z_t)$. Since $\Psi_t(\bm{X}^{(n)},\log p_0(Z_0)) = \log p_t^{(n)}(Z_t^{(n)})$ by Theorem \ref{thm:ContNormFlow}, (iv) and (\ref{eq:PsiRegularity}) imply $\log p_t(Z_t) = \Psi(\bm{X}, \log p_0(Z_0))$ and hence (\ref{eq:RoughNF}).
Let $x \in \mathbb{R}^d$ be arbitrary. To show that $\log p_t^{(n)}(x)$ converges uniformly in $t$, observe that
\[
\log p_t(x) = \Psi(\bm{X}, \log p_0(\Phi_{-t}(\bm{X},x))),
\]
and similarly for $\log p_t^{(n)}(x)$. Together with property II of geometric rough paths, inequality (\ref{eq:PhiRegularity}) with $\bm{Y} \equiv \bm{0}$ reveals that $\Phi_{-t}(\bm{X}^{(n)},x)$ and $\Phi_{-t}(\bm{X},x)$ are uniformly bounded in $t \in [0,T]$. Since $\log p_0$ is continuous, $\log p_0(\Phi_{-t}(\bm{X}^{(n)},x))$ converges to $\log p_0(\Phi_{-t}(\bm{X},x))$ uniformly in $t \in [0,T]$. 
Applying (\ref{eq:PsiRegularity}),
\begin{multline*}
\sup_{t\in[0,T]}|\log p_t^{(n)}(x) - \log p_t(x)| \leq C_{\gamma}^\Psi(\varrho_\beta(\bm{X}^{(n)}, \bm{X}) 
+ |\log p_0(\Phi_{-t}(\bm{X}^{(n)}, x)) - \log p_0(\Phi_{-t}(\bm{X}, x))| \to 0.
\end{multline*}
Finally, to prove (\ref{eq:ApproxAdjoint}), by \citet[Proposition 5.6]{friz2014course}, we can write $(\theta,L)$ as the solution to the rough differential equation
\begin{subequations}
\label{eq:LossSDE}
\begin{align}
&\dd \theta = 0 \\
&\dd L(Z_t,\log p_t(Z_t)) = \nabla_z L(Z_t, \log p_t(Z_t)) \cdot \dd Z_t + \nabla_\ell L(Z_t, \log p_t(Z_t)) \dd \log p_t(Z_t)
\end{align}
\end{subequations}
and similarly for $Z_t^{(n)}$ and $\log p_t^{(n)}(Z_t^{(n)})$, where $\nabla_z$ and $\nabla_\ell$ denote the gradients with respect to $Z_t$ and $\log p_t(Z_t)$, respectively. The derivative of $L$ with respect to $\theta$ is a derivative of (\ref{eq:LossSDE}) with respect to its initial condition, and hence (\ref{eq:ApproxAdjoint}) follows from \citet[Theorem 8.10]{friz2014course}.
\end{proof}

\subsection{Random continuous normalizing flows}
\label{sec:RandCNF}

By a conditioning argument, any random ODE, such as a Wong-Zakai approximation, may be treated as a continuous normalizing flow. Let $Z_t$ be the solution to a random ODE of the form
\begin{equation}
\label{eq:RandomODE}
\frac{\dd}{\dd t}Z_t = f(Z_t, \omega, t, \theta),
\end{equation}
where $\omega = (\omega_1,\dots,\omega_n) \sim q(\omega)$ is a random vector independent of $Z_t$, $t$, and $\theta$. The reduction of a random ODE to this form is in keeping with the reparameterization trick \citep{xu2019variance}. In particular, for the piecewise linear and Karhunen-Lo\`{e}ve approximations, each $\omega_i \sim \mathcal{N}(0,1)$. 
After conditioning on $\omega$, Theorem \ref{thm:ContNormFlow} applied to (\ref{eq:RandomODE}) provides a means of computing $\log p_t(Z_t\vert\omega)$, after sampling $Z_0 \sim p_0$. The density $p_t(Z_t)$ can be computed using a naive Monte Carlo estimator
\begin{equation}
\label{eq:DensityEstNaive}
p_t^\theta(Z_t) = \int p_t^\theta(Z_t \vert \omega)q(\omega) \dd \omega \approx \frac1N \sum_{i=1}^{n} p_t^\theta(Z_t \vert \omega_i),
\end{equation}
where the dependence on $\theta$ has been made explicit, and can be optimized over using the adjoint method. Analogously to \citet{chen2018neural,grathwohl2018ffjord}, the density of data $\v x$ may be estimated along a single sample path $B_t(\omega)$ (denoted $p(\v x\vert \omega)$) in the following way: letting $\Delta \log p_t^\omega = \log p_t(Z_t\vert \omega) - \log p(\v x \vert \omega)$, we see that $\Delta \log p_t^\omega$ also satisfies (\ref{eq:ContNormFlow}). By solving (\ref{eq:RandomODE}) and the corresponding (\ref{eq:ContNormFlow}) in reverse time from the initial conditions $Z_T = \v x$ and $\Delta \log p_T^\omega = 0$, we obtain $Z_0$ and $\Delta \log p_0^\omega$, and compute $\log p(\v x \vert \omega) = \log p_0(Z_0) - \Delta \log p_0^\omega$. This is shown in Algorithm \ref{alg:DensityEst}, which depends on an ODE solver \textsc{odesolve}, and yields a density estimation procedure for stochastic normalizing flows when paired with (\ref{eq:DensityEstNaive}). Note that by comparison to \citet[Algorithm 1]{grathwohl2018ffjord} which encompasses steps 6--12 of our Algorithm \ref{alg:DensityEst}, we see much of the density estimation procedure can be accomplished using an existing continuous normalizing flow implementation. In variational settings where $\log p_T(Z_T)$ is required, the same procedure applies, where $\mathbf{x}$ becomes $Z_T$ and is generated by the SDE as well.

\begin{algorithm}[htbp]
   \caption{Stochastic normalizing flows (density estimation; single path)}
   \begin{algorithmic}
   \STATE {\bfseries Input:} drift function $\mu$, diffusion function $\sigma$, an initial distribution $p_0$, final time $T$, minibatch of samples $\v x$, sample path $\tilde{B}_t(\omega)$ of Brownian motion approximation.
   \STATE {\bfseries Output:} an estimate of $\log p(\v x \vert \omega)$
   \end{algorithmic}
   \smallskip
   
    \begin{algorithmic}[1]
   \STATE Generate $\epsilon = (\epsilon_1,\dots,\epsilon_d)$ for (\ref{eq:TraceEst}). \smallskip
   \STATE {\bfseries function} \textsc{odefunc}$(z,t)$
    \STATE \hspace{1em}Compute $\tilde{\mu}(z,t)$ via (\ref{eq:ItoCorrection}).\hfill $\triangleright$  It\^{o} correction
    \STATE \hspace{1em}{\bfseries return} $\tilde{\mu}(z,t) + \sigma(z,t) \frac{\dd \tilde{B}_t(\omega)}{\dd t}$. 
   \STATE{\bfseries end function} \smallskip
   \STATE {\bfseries function} \textsc{aug}$((z,\log p_t),t)$
   \STATE \hspace{1em}$f_t \gets $ \textsc{odefunc}($z$, $t$, $\omega$)
   \STATE \hspace{1em}$J_t \gets -\nabla_z(\epsilon \cdot f_t) \cdot\epsilon$\hfill $\triangleright$  Trace estimator (\ref{eq:TraceEst}); $n = 1$
   \STATE \hspace{1em}{\bfseries return} $(f_t, J_t)$
   \STATE{\bfseries end function} \smallskip
   \STATE $(z,\Delta\log p_t^\omega) \gets $ \textsc{odesolve}(\textsc{aug},$(x,0)$,$0$,$T$)
   \STATE {\bfseries return} $\log p_0(z) - \Delta \log p_t^\omega$
\end{algorithmic}
\label{alg:DensityEst}
\end{algorithm}

A number of techniques exist for debiasing the logarithm of (\ref{eq:DensityEstNaive}) --- see \citet{rhee2015unbiased} and \citet{rischard2018unbiased}, for example. 
Alternatively, we lie in the setting of \emph{semi-implicit variational inference} seen in \citet{yin2018semi} and \citet{titsias2018unbiased}, and those techniques directly extend to our case as well. Naturally, it would be easiest to instead optimize the upper bound
\[
-\log p_t^\theta(Z_t) \leq -\mathbb{E}_{\omega} \log p_t^\theta(Z_t\vert \omega),
\]
and in many cases we have found this to be effective. Observing that
\begin{equation}
\label{eq:ELBO}
\log p_t^\theta(Z_t) - \kl(q \Vert p_{\omega \vert Z_t}^{\theta}) = \mathbb{E}_\omega \log p_t^\theta(Z_t \vert \omega),
\end{equation}
minimizing $-\mathbb{E}_{\omega}\log p_t^{\theta}(Z_t\vert \omega)$ maximizes the true log-likelihood regularized by the KL-divergence between the prior and posterior distributions for $\omega$, which reduces the effect of noise on the model. At the same time, parameterizations of the diffusion coefficient that allow $\|\sigma\|$ to shrink to zero will often do so, and should be avoided to remain distinct from a continuous normalizing flow.

\section{Numerical experiments}
\label{sec:Numerics}

\subsection{Samplers and density estimation from data}
For our first experiments, we train a stochastic normalizing flow (\ref{eq:MainSDE}) --- using Algorithm \ref{alg:DensityEst} with the upper bound (\ref{eq:ELBO}) --- to data generated from a specified target distribution. 
For our drift function, we adopt the same architecture used in the toy examples of \citet{grathwohl2018ffjord}; a four-layer fully-connected neural network with 64 hidden units in each layer. Dependence on time is removed to ensure a time-homogeneous, and hence, potentially ergodic SDE after training. All networks were trained using Adagrad \citep{duchi2011adaptive}, with $p_0 \sim \mathcal{N}(0,I)$ and a batch size of 1000 samples. 

\subsubsection{A two-dimensional toy example}
\label{sec:Toy}
In our first example, our data is generated from the banana-shaped distribution
\[
p(x,y) \propto \exp(-\tfrac12 (x^2 + \tfrac12 (x^2 + y)^2)).
\]
Two choices of diffusion coefficient are considered: the first, where $\sigma = I$, yields a neural SDE that can be trained using the techniques of \citet{li2020scalable}. For the second, we choose
\begin{equation}
\label{eq:SigmaChoice}
\sigma(x) = \lambda \left(\begin{matrix} 1 & \sigma_1(x) \\ \sigma_2(x) & 1 \end{matrix}\right),
\end{equation}
with $\lambda = 1$, and parameterize $(\sigma_1,\sigma_2)$ by a two-layered neural network with 64 hidden units. This SDE can only be trained using our method. After training, to emulate the application of these SDEs as approximate samplers, a single sample path with 10,000 steps was simulated for each model using the Euler--Maruyama method. The resulting paths are compared in Figure \ref{fig:FixedVsVariable}. From data alone, both models constructed recurrent processes. The addition of a trainable diffusion coefficient led to improved adaptation of the sampler to the underlying curvature. 
\begin{figure}[htbp]
    \centering
    \includegraphics[width=0.6\columnwidth]{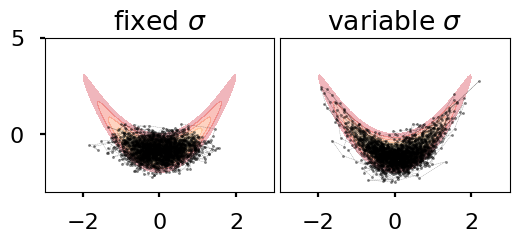}
    \caption{Sample paths from SDEs trained as stochastic normalizing flows to a banana-shaped distribution.}
    \label{fig:FixedVsVariable}
\end{figure}

\subsubsection{Visualizing regularization}
As discussed in \citet{liu2019neural}, the stochastic noise injection in SDEs is a natural form of regularization, that can potentially improve robustness to noisy or adversarial data. We visualize this effect by considering the same stochastic normalizing flows treated in \S\ref{sec:Toy} with diffusion coefficient (\ref{eq:SigmaChoice}), and adjusting the parameter $\lambda > 0$. Our data is generated in polar coordinates from a ten-pointed star-shaped distribution by
\[
\theta \sim \mathrm{Unif}(-\pi,\pi),\quad r\vert \theta \sim \mathcal{N}(\tfrac{2}{\sqrt{1 + \frac12 \sin(10 \theta)}}, \tfrac{9}{400}).
\]
In Figure \ref{fig:RegDensity}, we plot the densities for 
$\lambda \in \{0, \tfrac{1}{10}, \tfrac12, 1\}$ computed using Algorithm \ref{alg:DensityEst}, noting that the $\lambda = 0$ case corresponds to a continuous normalizing flow. Increasing $\lambda$ reveals generative models with expectedly higher variance, but with improved capacity to smooth out minor (potentially, unwanted) details.
\begin{figure}
    \centering
    \includegraphics[width=0.5\columnwidth]{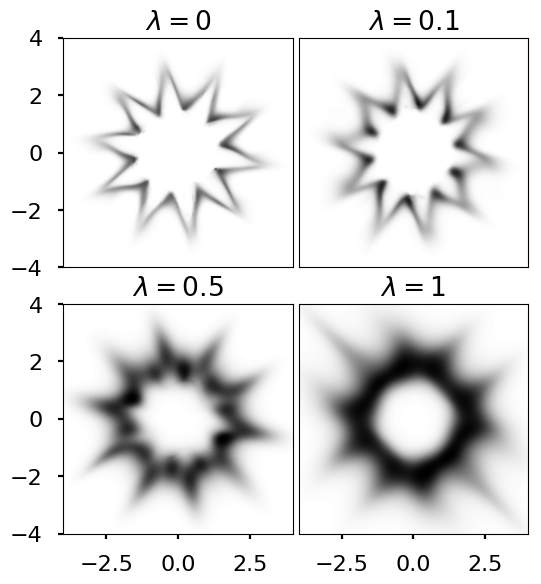}
    \caption{Density plots of stochastic normalizing flows trained to a star-shaped distribution with varying diffusion coefficients.}
    \label{fig:RegDensity}
\end{figure}

\subsection{Optimizing stochastic MCMC}
An interesting class of SDE models for approximating a target distribution $p$ are \emph{targeted diffusions}, solutions to SDEs that are $p$-ergodic. A convenient representation of such diffusions are known \citep{ma2015complete}. Because these diffusions are frequently used in MCMC algorithms, in a sense, conducting VI with respect to targeted diffusions is analogous to optimizing the convergence rate of stochastic MCMC algorithms. 

To illustrate the potential applications of stochastic normalizing flows for finding and examining optimal stochastic MCMC algorithms for a particular target distribution, we consider a basic setup, where $p$ is the one-dimensional Cauchy distribution $p(x) \propto (1 + x^2)^{-1}$. All $p$-ergodic SDEs are of the form
\begin{equation}
\label{eq:CauchySampler}
\dd Z_t = (-2 \sigma(Z_t)^2 Z_t / (1 + Z_t^2) + \tfrac12 \sigma'(Z_t)) \dd t + \sigma(Z_t) \dd B_t,
\end{equation}
and we may choose $\sigma$ arbitrarily. \emph{A priori}, an optimal choice of $\sigma$ (up to constants) to ensure rapid mixing of (\ref{eq:CauchySampler}) does not appear obvious. The present rule of thumb from second-order methods takes $\sigma \approx (\log p)''$ \citep{girolami2011riemann}. We train a stochastic normalizing flow for (\ref{eq:CauchySampler}) with $\sigma$ parameterized by a four-layer neural network with 32 hidden units in each layer. The corresponding loss function is taken to be the Kullback-Leibler divergence $\log p_T(Z_T) - \log p(Z_T)$, estimated using Algorithm \ref{alg:DensityEst}, with an $L^1$ penalty term $10^{-4}\|\v w\|_1$ over the weights $\v w$ of the neural network, to prevent taking $|\sigma| \to +\infty$. The results are presented in Figure \ref{fig:CauchyPlot}.
\begin{figure}
    \centering
    \includegraphics[width=0.7\columnwidth]{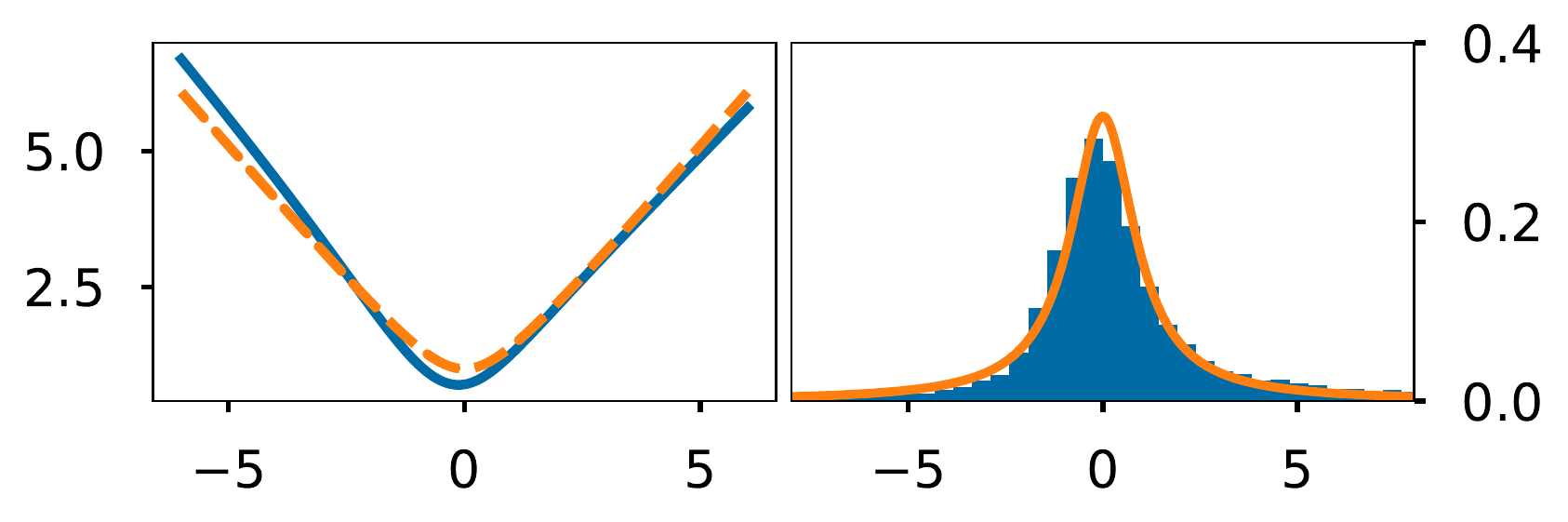}
    \caption{Stochastic normalizing flow targeting Cauchy distribution. \emph{Left}: trained diffusion coefficient (blue) compared to $\sqrt{1 + x^2}$ (orange dashed). \emph{Right}: Histogram of 20,000 generated samples (blue) and Cauchy density (orange). }
    \label{fig:CauchyPlot}
\end{figure}
Curiously, after training, we found the apparent ``optimal'' choice is approximately $\sigma(x) \propto \sqrt{1 + x^2} \propto \sqrt{\frac{(\log \varphi)'}{(\log p)'}}$, where $\varphi$ is the density of the standard normal distribution.



\section{Conclusion}
\label{sec:Conclusion}

We have extended the continuous normalizing flows framework to generative models involving SDEs. Justified by rough path theory, our framework enables practitioners of neural ODEs to apply their existing implementation for training neural SDEs. This is advantageous, as neural SDEs have been suggested to be more robust than neural ODEs in high-dimensional real-world examples \citep{liu2019neural,li2020scalable}. Stochastic normalizing flows can be implemented as a device for investigating ``optimal'' hyperparameters in stochastic MCMC, which could prove useful for informing future research, but they may require implementational improvements for high-dimensional cases, e.g., variance reduction techniques and improved loss estimators.

\paragraph{Acknowledgements.}
We would like to acknowledge DARPA, NSF, and ONR for providing partial support of this work.

\end{document}